\documentclass[runningheads]{llncs}

\usepackage[T1]{fontenc}
\usepackage{graphicx}
\usepackage[english]{babel}
\usepackage{amssymb}
\usepackage{amsmath}
\usepackage{hyperref}

\usepackage{adjustbox}
\usepackage{tikz-cd}
\usepackage{mathtools}
\usepackage{microtype}
\usepackage{listings}
\usepackage{algpseudocode}

\usepackage{stmaryrd}
\usepackage{latexsym}

 \DeclareMathOperator{\effect}{effect} \DeclareMathOperator{\causes}{causes}
 
 \DeclareMathOperator{\condition}{condition}

\DeclareMathOperator{\Graph}{Graph}

\begin{document}

\title{From probability to causality in \\ probabilistic logic programming}
%
\titlerunning{From probability to causality}
\author{Zora Wurm \inst{1} \and Kilian R\"{u}ckschlo\ss \inst{2} \and 
Felix Weitk\"{a}mper\inst{3,1}}
\authorrunning{Z. Wurm et al.}
%
\institute{ Ludwig-Maximilians-Universit\"at M\"unchen, Oettingenstraße 67, 80538 M\"unchen \and
Eberhard-Karls-Universität Tübingen, Auf der Morgenstelle 10, 72076 Tübingen
\email{kilian.rueckschloss@uni-tuebingen.de} 
 \and
German University of Digital Science, Marlene-Dietrich-Allee 14, 14482 Potsdam 
 \email{felix.weitkaemper@german-uds.de}
}
\maketitle

\begin{abstract}
  Probabilistic logic programming is a formalism of statistical relational artificial intelligence that supports causal queries, including interventions from outside the system. When the structure of a probabilistic logic program is learned from data, however, only probabilistic information is used, and a single probability distribution may be compatible with several causal orders.
  This leads to ambiguity in interventional reasoning, raising the question of when the causal order is uniquely determined by the distribution.
Exploiting the relationship between acyclic probabilistic logic programs and Bayesian networks, we derive conditions under which the probabilistic information encoded in a program determines a unique causal order. We also incorporate constraints arising from relational structure by taking into account prescribed sets of causal symmetries induced by the underlying relational vocabulary. The result is a method for verifying when a learned probabilistic logic program supports well-defined intervention semantics.
\end{abstract}

\begin{keywords}
 Causality;
 Causal Bayesian networks;  
 Probabilistic Logic Programming;  
 Causal Structure Discovery;  
 Causal Symmetries;
\end{keywords}

\section{Introduction}
\label{sec: Introduction}

Understanding the causes of events enables us to predict the consequences of actions that interfere with the internal mechanisms of a system. 
Whether building autonomous agents, researching medical treatments, or modelling the future developments of climate change -- understanding cause-effect mechanisms is crucial,  since statistical correlation alone is insufficient to assess the effects of outside intervention \cite{Causality}. 
Furthermore, many causal mechanisms in real-life systems are uncertain. 
Integrating probability into causal modelling allows us to quantify likelihood and randomness in the effects of our actions. 

Causal Bayesian networks represent causal relationships of events as a directed acyclic graph, where each node is annotated with a probability conditioned on the values of its direct causes. 
Since they capture causal dependencies in probabilistic settings, causal Bayesian networks also support the simulation of external interventions on the system they model. 
Probabilistic logic programming combines logical reasoning with probabilistic inference by labelling logical rules with probabilities \cite{riguzzi2022foundations}. 
Beyond probabilistic reasoning, probabilistic logic programs can also model interventions and even compute the probabilities of counterfactual events.   
The probability distributions induced by acyclic probabilistic logic programs can be described by an associated Bayesian network, and prior work has shown that the intervention notion in probabilistic logic programs is compatible with that of the associated Bayesian network \cite{ruckschloss2021exploiting}. 

In recent years, significant progress has been made in inducing both the structure and the probability parameters of a probabilistic logic program from data \cite{slipcover,liftcover}. 
This raises the question: can programs learned from probabilistic data answer interventional queries?
The old adage that ``correlation is not causation'' reminds us that in general, probabilistic dependency alone cannot determine the direction of cause and effect. 
Yet under suitable assumptions, constraints from probabilistic dependencies and independencies often do suffice to pinpoint the causal structure underlying a given probability distribution.   
Such investigations have a long history in the context of Bayesian networks \cite{pcalgo}, where Meek \cite{CausalInferenceAndCausalExplanationWithBackgroundKnowledge} developed a complete and correct method for determining when causal reasoning is admissible based solely on the probabilistic independencies encoded in the graph. In this contribution, we transfer these ideas to the setting of probabilistic logic programs. 

As a paradigm for statistical relational learning, probabilistic logic programs are typically valued for their ability to express relational knowledge and integrate first-order reasoning.
By abstracting program structure from a specific data instance, we can also hope to infer more general causal mechanisms. 
For example, in a relational data set describing various burning objects, we may detect a causal link between fire and smoke -- whenever something burns, it produces smoke, regardless of whether it is a bonfire, a house, or a cigarette.

By viewing the predicates of a relational vocabulary as expressing such symmetries, we can constrain the space of plausible causal explanations.
Assuming the symmetries encoded by the relational alphabet, this remains true even when we examine only a single ground instance.
To formalise this, we introduce a general notion of orientability relative to a set of causal symmetries, along with new orientation rules that exploit this additional constraint.

\section{Causal Bayesian Networks}
\label{subsec:CBN}

Pearl \cite{Causality} established that questions about the effect of external interventions on a probabilistic system cannot be answered when presented only with the probability distribution encoded by the system.
The simplest example is an isolated relationship between two random variables. Indeed, if $X$ and $Y$ are Boolean random variables, their joint distribution can only tell use the correlation between them;
it cannot distinguish between an effect of $X$ on $Y$, an effect of $Y$ on $X$ or indeed a hidden confounding variable $Z$ affecting both $X$ and $Y$.
For instance, if we merely observe that patients taking a particular drug are more likely to have high blood pressure, we cannot tell whether the drug increases blood pressure, high blood pressure causes people to take the drug, or in fact an underlying illness is responsible both for the patients' taking the drug and for the high blood pressure. Thus, we have no way of knowing the effect of prescribing the drug on a patient's blood pressure. 

However, if we not only have the distribution but also an understanding of the direction of causation, then we can predict the effect of external interventions.
To this end, Pearl \cite{Causality} encodes probability distributions and causal influences in causal Bayesian networks.
As we are ultimately interested in probabilistic logic programs, we restrict our attention to Boolean random variables.  

\begin{definition}[Causal Bayesian networks]
A (causal) Bayesian network $B$ consists of a directed acyclic graph $G$ on a set of Boolean random variables $V$ and, for every random variable $A \in V$ and every subset $T \subseteq \mathrm{Pa}(A)$ of the parents of $A$, a conditional probability $\mu_T(A) \in [0,1]$.   
\end{definition}

Syntactically, causal Bayesian networks are just ordinary Bayesian networks, and therefore induce a probability distribution in the usual way.
Note in particular that value assignments $v:V \rightarrow \{\top, \bot \}$, (possible configurations) of the random variables in $V$ can be seen as subsets of $V$, where $A \in v$ if and only id $v(A) = \top$.   

\begin{definition}[Probabilistic Bayesian network semantics]
The Bayesian network $B$ assigns to a value assignment  $v \subseteq V$ the probability
\[
\pi_B(v) := \pi(v) :=
\prod_{A \in V} \mu_{v \cap \mathrm{Pa}(A)}(A).
\]

\end{definition}

Note that according to this semantics, $\mu_T(A)$ is indeed the conditional probability that $A = \top$ given that for all $X \in \mathrm{Pa}(A)$,  $X = \top$ if and only if $X \in T$. 
This also implies that once the underlying directed acyclic graph is fixed, the parameters of the Bayesian network are unique. 

However, they are given the moniker \emph{causal} to emphasise that their arrows are oriented along the true causal ordering. Thus, they also support the computation of external interventions.

\begin{definition}[Intervention on a Bayesian network]
  Let $\mathcal{B}$ be a Bayesian network, $A \in V$ and $t \in \{\top, \bot\}$. Then the \emph{intervention} $\mathcal{B}_{\mathrm{do}(A = t)}$ is a Bayesian network on the graph $G_{\mathrm{do}(A = t)}$, where 
  \begin{enumerate}
      \item The graph $G_{\mathrm{do}(A = t)}$ has nodes $V$ and  edges $V\setminus \{(X,A) \mid X \in V\}$;
      \item The conditional probability $\mu_T(B)$ is the corresponding conditional probability in $\mathcal{B}$ if $B \in V \setminus A$;
      \item  The conditional probability $\mu_\emptyset(A) := 1$ if $t = \top$ and $\mu_\emptyset(A) := 0$ if $t = \bot$. 
  \end{enumerate}
\end{definition}

In summary, external interventions can be computed from a causal Bayesian network as long as the arrows define the true directions of causal flow.
In practice, the causal direction could be given as background knowledge, specified by a domain expert.
Frequently, however, the causal structure is not previously known, and then the structure of the Bayesian network is itself learned from data.
In that case, even assuming perfect learning, the structure is not guaranteed to match the causal relationships, as the learner cannot possible distinguish  multiple Bayesian networks with different graphs inducing exactly the same probability distribution on the basis of data sampled from that distribution.
On the other hand, if every possible Bayesian networks that encodes the same probability distribution as a given (learned) Bayesian network $\mathcal{B}$ has a directed edge from $A$ to $B$, then the existence of this edge does follow from the probabilistic data encoded by $\mathcal{B}$.

The directed acyclic graph underlying the Bayesian network determines probabilistic independencies that hold in any Bayesian network defined on it. 

\begin{definition}
    Let $G$ be a directed acyclic graph, let $Z$ be a set of nodes in $G$ and let $P = (A_1, \dots, A_n)$ be an undirected path in $G$. 
    Then $Z$ is said to \emph{block} $P$ if there is a $z \in Z$ and nodes $A$ and $B$ such that the directed edges $(A,z)$ and $(z,B)$ are on the path or such that $(z,A)$ and $(z,B)$ is on the path, 
    or if there is a $y$ such that $(A,y)$ and $(B,y)$ are on the path, and neither $y$ nor any descendant of $y$ lies in $Z$. 
    If $X$ and $Y$ be nodes in $G$ and $Z$ a set of nodes of $G$, then  
    $Z$ \emph{d-separates} $X$ and $Y$ if every undirected path from $X$ to $Y$ is blocked by $Z$.  

    If $G$ and $G'$ are directed acyclic graphs with the same set of nodes, they are \emph{Markov-equivalent} if for $X,Y,Z$ as above, $Z$ d-separates $X$ and $Y$ in $G$ if and only if $Z$ d-separates $X$ and $Y$ in $G'$.  
\end{definition}

This motivates the strategy adopted by Spirtes et al.~\cite{CausationPredictionAndSearch} and Meek \cite{CausalInferenceAndCausalExplanationWithBackgroundKnowledge}, who operate under the fundamental assumption that all independencies of a probability distribution are captured by the structure of the Bayesian network:

\begin{definition}[Faithfulness]
A probability distribution $\mu$ on a set of random variables $V$ is \emph{faithfully Markov} to a directed acyclic Graph $G$ on $V$ if for any two random variables $X,Y \in V$ and any subset $Z \subset ZV$, $X$ and $Y$ are probabilistically independent if and only if $Z$ d-separates $X$ and $Y$ in $G$.  
\end{definition}

\begin{remark}
The faithfulness condition has been shown to be satisfied Lebesgue almost always in a Boolean Bayesian network, in the sense that if the lists of numbers $\mu_T$ defining a Bayesian network are viewed as vectors in Euclidean space, the set of Bayesian network specifications violating the faithfulness condition has Lebesgue measure zero \cite{FaithfulnessHoldsAlmostAlways}.

However, this does not mean that faithfulness is not breached in practice. 
A common scenario are deterministic variables in Bayesian networks, all of whose associated probabilities are 0 or 1. Such variables frequently lead to unfaithfulness, as in the constellation $A \rightarrow B \rightarrow C$. If $B$ is deterministic and depending only on $A$, then conditioning on $A$ renders $B$ a constant and thus independent of its direct successor $C$. 
\label{rem:faithfulness}
\end{remark}
It turns out that two nodes $X$ and $Y$ are adjacent in a directed acyclic graph if and only if they are not d-separated by any $Z \subseteq V\setminus \{X,Y\}$. Therefore, any two Markov-equivalent graphs share the same adjacencies. 
This leaves only the direction of the edges as potentially variable. 

\begin{definition}[Orientability]
  A directed edge $(X,Y)$ in a directed acyclic graph $G$ is \emph{orientable} if it is contained in any graph that is Markov-equivalent to $G$. $G$ itself is called \emph{orientable} is every edge in $G$ is orientable.  
\end{definition}

Meek \cite{CausalInferenceAndCausalExplanationWithBackgroundKnowledge} showed that the set of all orientable edges can be determined through iterating a small number of local rules.

\begin{proposition}[Characterising orientability]
  The set of all orientable edges in a directed acyclic graph $G$ can be determined recursively as follows: 
  \begin{enumerate}
      \item If $(A,B)$ and $(C,B)$ be edges of $G$ such that $A$ and $C$ are not adjacent in $G$, then $(A,B)$ and $(C,B)$ are orientable. 
      \item If $(A,B)$ is orientable, $(B,C)$ an edge, and $A$ and $C$ not adjacent, then $(B,C)$ is orientable. 
      \item If $(A,B)$ and $(B,C)$ are orientable and $(A,C)$ is an edge, then $(A,C)$ is also orientable. 
      \item If $(B,D)$ and $(C,D)$ are orientable, $A$ adjacent to both $B$ and $C$ but $B$ not adjacent to $C$ and $(A,D)$ an edge, then $(A,D)$ is orientable. 
      \item If $(A,C)$ and $(C,D)$ are orientable, $B$ is adjacent to both $A$ and $C$ and $(B,D)$ is an edge, then $(B,D)$ is orientable.   
  \end{enumerate}
  \label{Prop:Meekrules}
\end{proposition}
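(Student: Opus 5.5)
The plan is to prove the two halves separately: \emph{soundness} (every edge produced by rules 1--5 is orientable) and \emph{completeness} (every orientable edge is eventually produced). Both rest on the characterisation of Markov equivalence due to Verma and Pearl. Two directed acyclic graphs on $V$ are Markov-equivalent if and only if they have the same skeleton and the same v-structures, that is, the same triples $A \rightarrow B \leftarrow C$ with $A,C$ non-adjacent. The paper already observes that Markov-equivalent graphs share adjacencies. For the v-structures I would derive the claim directly from d-separation: a non-adjacent pair $A,C$ is d-separated by some set $Z$. If $B$ is a collider between them, no such separating set contains $B$; if $B$ is a non-collider, every such set does. Hence the collider status of $B$ is fixed by the independence model. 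The converse direction (same skeleton and v-structures imply the same d-separations) is a standard path argument and will be cited rather than reproved.

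Soundness then goes rule by rule. Fix an arbitrary $G'$ Markov-equivalent to $G$; it has the same skeleton and v-structures, and we must show it contains the stated edges.
\begin{itemize}
\item Rule 1 is exactly the invariance of v-structures.
\item Rule 2: if $G'$ had $C \rightarrow B$, then $A \rightarrow B \leftarrow C$ would be a new v-structure.
\item Rule 3: $C \rightarrow A$ would close the directed cycle $A\rightarrow B\rightarrow C\rightarrow A$.
\item Rule 4: if $D \rightarrow A$, then acyclicity together with $B\rightarrow D$ forces $B \rightarrow A$, and likewise $C \rightarrow A$. This creates the v-structure $B\rightarrow A\leftarrow C$ with $B,C$ non-adjacent, which is absent in $G$ because $(A,D)$ being an edge of $G$ oriented towards $D$ forbids it there.
\item Rule 5: $D \rightarrow B$ together with $A\rightarrow C\rightarrow D$ forces $B\rightarrow A$ and $B \rightarrow C$. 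This yields either a cycle or a v-structure mismatch, settled by a short case check.
\end{itemize}
Each step is a routine induction on the order in which edges are derived.

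Completeness is the main obstacle. Let $H$ be the partially directed graph obtained by closing the v-structure orientations of $G$ under rules 2--5, with all other skeleton edges left undirected. I must show that every undirected edge $X - Y$ of $H$ can be oriented both ways in some DAG with the same skeleton and v-structures. My plan follows Meek's argument in three steps.
\begin{enumerate}
\item Using closure under rules 2--5, show that no configuration $A \rightarrow B - C$ with $A,C$ non-adjacent occurs in $H$. Show further that the undirected subgraph $U$ of $H$ is chordal: a chordless undirected cycle of length $\ge 4$ would, in any consistent DAG extension, have to contain a v-structure not present in $G$.
\item Since $U$ is chordal, each connected component has a perfect elimination ordering, and hence an acyclic orientation without v-structures in which any prescribed edge $X - Y$ points either way. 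For this I would choose a maximum cardinality search started at $X$ or at $Y$.
\item Check that combining this orientation of $U$ with the directed part of $H$ creates neither cycles nor new v-structures. No new v-structure arises because of the absence of the pattern $A\rightarrow B - C$ from step 1 and because $U$'s orientation has no v-structures. No cycle arises because rule 3 makes the directed part transitively closed across undirected chains; this needs a careful argument that any cycle mixing directed and undirected edges can be shortened to a contradiction with rules 3 or 4.
\end{enumerate}
This last verification, that the chordal orientations glue consistently with the directed edges, is where I expect most of the work to lie. It is also where I would rely on Meek's published proof if the direct case analysis grows too long.
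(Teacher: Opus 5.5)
\paragraph{Gap: rules 4 and 5 are false as stated.} Your overall plan is the standard one: the Verma--Pearl characterisation, soundness checked rule by rule, and Meek's chordality argument for completeness. That is all the paper relies on, since it gives no proof and only cites Meek. The genuine gap is in your soundness checks for rules 4 and 5. They cannot be completed, because as transcribed here these rules omit side conditions of Meek's R3 and R4 and are false.

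For rule 4, your step ``$A\rightarrow D$ in $G$ forbids the v-structure $B\rightarrow A\leftarrow C$'' is wrong. Let $G$ have edges $B\rightarrow A$, $C\rightarrow A$, $B\rightarrow D$, $C\rightarrow D$, $A\rightarrow D$, with $B,C$ non-adjacent. Every hypothesis of rule 4 holds. Yet reversing $A\rightarrow D$ gives an acyclic graph with the same skeleton and the same two v-structures, so $(A,D)$ is not orientable. Meek's R3 fires only while the edges $A$--$B$ and $A$--$C$ are still undirected. The needed extra hypothesis is that $B\rightarrow A\leftarrow C$ is not a v-structure of $G$, and with it your argument is exactly right.

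For rule 5, you have the forced orientations backwards. From $D\rightarrow B$ and $A\rightarrow C\rightarrow D$, acyclicity forces $C\rightarrow B$ and $A\rightarrow B$, not $B\rightarrow A$ and $B\rightarrow C$. The only contradiction then available is the collider $A\rightarrow B\leftarrow D$. That is a v-structure only if $A$ and $D$ are non-adjacent, which is precisely the hypothesis of Meek's R4 that the statement drops.

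Without it the rule fails. Take $F\rightarrow C\leftarrow A$ with $F$ adjacent only to $C$, together with $A\rightarrow B$, $A\rightarrow D$, $C\rightarrow B$, $C\rightarrow D$, $B\rightarrow D$. Then:
\begin{itemize}
\item $(A,C)$ is orientable by rule 1;
\item $(C,D)$ is orientable by rule 2, via $F$;
\item $B$ is adjacent to both $A$ and $C$;
\item but replacing $B\rightarrow D$ by $D\rightarrow B$ gives a Markov-equivalent DAG.
\end{itemize}
So no ``short case check'' can close rule 5.

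Consequently, your closure $H$ can contain non-orientable arrows (for example $A\rightarrow D$ in the first example). The claimed equality between the closure and the set of orientable edges fails whatever happens in the completeness half.

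\paragraph{Repair.} Add the two missing hypotheses. Alternatively, drop rule 5 altogether, since without background knowledge Meek shows the corrected rules 1--4 are already complete. With either fix, your soundness arguments go through and your completeness sketch is Meek's.
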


\section{Propositional probabilistic logic programs}

We first introduce ground probabilistic logic programs and associate them with Bayesian networks. 

Let us fix a propositional alphabet ${P}$ in two parts with an \textbf{external vocabulary}~$E \subset P$ and an \textbf{internal vocabulary} $I := P \setminus E$. Here, $P$ is a finite propositional vocabulary, i.e.~it consists of a finite set of propositions. 
Propositions are also called \emph{atoms}, and atoms $p$ and their negations $\neg p$ are known as \emph{literals}. 

As we will be using encoded probabilistic independencies to identify possible network structures, we must restrict ourselves our random variables to those that are actually random \cite{independencies}, deviating from the original definition \cite{ProbLog}. 
\begin{definition}[ProbLog Clause]
A \emph{propositional ProbLog clause $RC$} is an expression of the form 
$\left(\pi :: R \leftarrow R_1,...,R_m, L_1,...,L_n. \right)$, written as 
$$\left(
\pi_{RC} :: \effect(RC) \leftarrow \causes(RC) \cup \condition (RC)
\right),
$$
which is given by the following data:
\begin{enumerate}
\item[i)]
an internal atom $R := \effect (RC)$, called the \emph{effect} of $RC$
\item[ii)]
a finite and possibly empty set of internal literals $\causes (RC) := \{ R_1,...,R_m \}$, called the \emph{causes} of~$RC$
\item[iii)]
a finite and possibly empty set of external literals $\condition (RC) := \{ L_1 , ... , L_n \}$, called the \emph{condition} of~$RC$  
\item[iv)]
a \emph{probability} $\pi (RC) \in (0,1)$ (note that we disallow $\pi (RC) = 1$). 
\end{enumerate}
We call $RC$ \emph{positive} if the set of causes $\causes(RC)$ contains only positive literals. 
\end{definition}

Propositional probabilistic logic programs define a distribution over valuations of the propositions in the internal vocabulary. 
To relate them to Bayesian networks, which are necessarily acyclic, we restrict ourselves to acyclic ProbLog programs. 

\begin{definition}[Propositional ProbLog program]
A \emph{propositional Problog program} $\Pi$ is a finite set of ProbLog clauses.
We say that $\Pi$ is \emph{acyclic} if, disregarding its probabilities, $\Pi$ is an acyclic logic program. 
\end{definition}

The probability distribution induced by such a program can be stored in a Bayesian network on its dependency graph, which depends on the active clauses. These are dictated by an external valuation $V$. 

\begin{definition}
Let $V$ be a valuation of the propositions in $E$. A clause of $\Pi$ is then said to be \emph{activated} by $V$ if its condition is satisfied by $V$. Then the relative program $\Pi^V$ with respect to $V$ consists of the clauses 
\[ \pi :: \effect(RC) \leftarrow \causes(RC) \]
for all clauses $RC \in \Pi$ activated by $V$. 
\end{definition}

Note that in such a relative program, all atoms occurring in a clause are internal. 
We now restrict ourselves to \emph{acyclic} relative programs. 

\begin{definition}
    Let $\Pi$ be a propositional ProbLog program and $V$ an external valuation such that $\Pi^V$ is acyclic. 
    Then the \emph{dependency graph} $\mathrm{Graph}_V(\Pi)$ of $\Pi^V$ is the directed acyclic graph whose node set is the internal vocabulary $I$ and where two nodes are connected by an edge $p \longrightarrow q$ whenever there is a clause $RC \in \Pi^V$ with head $q$ such that either $p$ or $\neg p$ occurs among the causes of $RC$.  
\end{definition}

Now we can establish the relationship to Bayesian networks to define the probabilistic semantics of propositional ProbLog programs.
From now on, we frequently identify a propositional ProbLog program with the probability distribution it induces, for instance allowing us to write ``$\Pi^V$ is Markov to a graph $G$'' when its induced probability distribution is Markov to $G$.  

\begin{definition}
    Let $\Pi$ be a propositional ProbLog program and $V$ an external valuation. Then the semantics of $\Pi$ is given as the Bayesian network on $\mathrm{Graph}_V(\Pi)$ where the conditional probability of an internal atom $p$ given a valuation $v$ on its parents is given by 
    \[\textrm{noisy-or}\left( \left[ \pi_{RC}\in \Pi^V \mid v \models \causes(RC)\right] \right)\]  
    where the noisy-or function associates with every multi-set $S$ of values in the unit interval the number $1 - \prod_{s \in S}(1 - s)$, which is precisely the probability that if each $s \in S$ is the probability of an independent Bernoulli trial, at least one of them would succeed.  
\end{definition}

We illustrate these concepts with a brief example:

\begin{example}
    Consider a situation where something might burn if flammable, and then might smoke. It is particularly likely to smoke if it is not dry. Then $I := \{\mathrm{burns}, \mathrm{smokes}\}$, and $E := \mathrm{flammable}, \mathrm{dry}$. 
    \begin{align}
      \pi_1 &:: \mathrm{burns} \leftarrow \mathrm{flammable}.\\
      \pi_2 &:: \mathrm{smokes} \leftarrow \mathrm{burns}.
      \pi_3 &:: \mathrm{smokes} \leftarrow \mathrm{burns}, \label{line:edge}\neg\mathrm{dry}.
    \end{align}
    An external valuation $v$ determines whether $\mathrm{flammable}$ and/or $\mathrm{dry}$ are true. If $v = \{\mathrm{flammable}\}$, then all three clauses are activated, and the relative program is  
    \begin{align}
      \pi_1 &:: \mathrm{burns}.\\
      \pi_2 &:: \mathrm{smokes} \leftarrow \mathrm{burns}.
      \pi_3 &:: \mathrm{smokes} \leftarrow \mathrm{burns}.
    \end{align}
    This gives rise to a Bayesian network on the graph 
    \[ \mathrm{burns} \rightarrow \mathrm{smokes} \]
    which captures the probability distribution encoded by the probabilistic logic program.
    In particular, the probability of $\mathrm{smokes}$ conditioned on $\mathrm{burns}$ is given by $1 - (1 - \pi_1)(1 - \pi_2)$,
    as the noisy-or function models two separate, independent processes leading to smoke
    (the one that is always activated and the additiona process only activated for non-dry materials). 
    \label{exam:log}
\end{example}

Interventions on propositional programs have been widely studied \cite{cplogic,ruckschloss2021exploiting} as they have a natural realisation through deleting clauses and adding facts. 

\begin{definition}
    Let $L \in \{p, \neg p\}$ be an internal literal and $\Pi$ be a propositional ProbLog program. 
    Then, the \emph{intervention} $\Pi_{\mathrm{do}(L)}$ is obtained from $\Pi$ in two steps. First, remove all clauses with $p$ in the head. Then, if $L$ is a positive intervention, add the fact $p$ to the program. 
\end{definition}
Note that it is irrelevant in which order relativisation to an external valuation and intervention are performed. We can therefore write $\Pi_{\mathrm{do}(L)}^V$ for the program that results from both operations. 
This operation is known to be consistent with the notion of an intervention in the underlying Bayesian network.
For instance, in the situation of Example \ref{exam:log}, releasing smoke unrelated to the fire would remove  Line (\ref{line:edge}) from the program and thus also the arrow from the resulting Bayesian network. This main result of \cite{ruckschloss2021exploiting} can be formalised as follows:  

\begin{proposition}
    Let $\Pi$ be a propositional ProbLog program. Then the probability distribution of $\Pi_{\mathrm{do}(L)}^V$ coincides with the outcome of intervening on $L$ in the Bayesian network of $\Pi^V$, where an intervention on an atom $A$ is modelled as an intervention on $A = \top$, and an intervention on $\neg A$ is modelled as an intervention on $A = \bot$.   
\end{proposition}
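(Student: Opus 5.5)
We compare the two Bayesian networks directly: the one the paper assigns to $\Pi_{\mathrm{do}(L)}^V$, and the intervened network $\mathcal{B}_{\mathrm{do}(A=t)}$ obtained from the Bayesian network $\mathcal{B}$ of $\Pi^V$. Here $L \in \{A, \neg A\}$ and $t = \top$ exactly when $L = A$. Since $\pi_B(v)$ is a product of local conditional probabilities, it suffices to show two things: the graphs agree in their parent sets (up to edges that do not affect the product), and for every node $B$ and every parent configuration the local conditional probabilities coincide. Equality of the distributions then follows from the product formula.

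First we compare the graphs. Deleting every clause with head $A$ removes exactly the edges $X \to A$ from $\mathrm{Graph}_V(\Pi)$. This uses the observation, stated in the text, that relativisation and intervention commute, so the clauses of $(\Pi_{\mathrm{do}(L)})^V$ are those of $\Pi^V$ minus the ones with head $A$, plus possibly the fact $A$. The added fact has no causes and so contributes no edges. All other clauses are untouched, so for $B \neq A$ the parent set $\mathrm{Pa}(B)$ is unchanged. Hence $\mathrm{Graph}_V(\Pi_{\mathrm{do}(L)})$ has node set $I$ and edges $\mathrm{Graph}_V(\Pi)$ minus $\{(X,A)\}$, which is $G_{\mathrm{do}(A=t)}$. (Outgoing edges of $A$ are kept; the definition's ``$V\setminus\{(X,A)\}$'' should be read as the edge set minus the incoming edges of $A$.) The intervened program is also acyclic, since we only removed edges.

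Next we compare the local probabilities. For $B \neq A$, the multiset $[\pi_{RC} \mid v \models \causes(RC)]$ of clauses with head $B$ is literally the same in both programs, so the noisy-or values coincide with those of $\mathcal{B}$, matching item 2 of the intervention definition. For $A$ itself we split into two cases.

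\emph{Case $L = \neg A$.} No clause has head $A$, so the multiset is empty and $\mu_\emptyset(A) = 1 - \prod_{\emptyset} = 0$.

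\emph{Case $L = A$.} The only clause with head $A$ is the fact $A$. This step is the main obstacle, because the clause definition requires $\pi(RC) \in (0,1)$ and so forbids probability $1$. We therefore have to read the added fact as a deterministic fact with probability $1$, which the definition of intervention implicitly permits. We will make this convention explicit, after which $\mu_\emptyset(A) = 1 - (1-1) = 1$.

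In both cases we recover item 3 of the intervention definition. Multiplying the local factors over all nodes then yields the same $\pi(v)$ for every valuation $v \subseteq I$, which proves the proposition.
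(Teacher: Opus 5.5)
Your proof is correct. Your route differs from the paper's because the paper gives no argument of its own: it presents the proposition as a formalisation of the main result of \cite{ruckschloss2021exploiting}. There the semantics of a ProbLog program is the standard possible-worlds (distribution) semantics. The substantive content imported from that work is therefore that this semantics factorises as the noisy-or Bayesian network on the dependency graph, and that the same holds after deleting the clauses with head $A$ and adding the fact.

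The present paper instead \emph{defines} the distribution of $\Pi^V$ to be the noisy-or network on $\mathrm{Graph}_V(\Pi)$. Within that framework your direct comparison is a complete, self-contained proof. Its ingredients are:
\begin{enumerate}
\item only the incoming edges of $A$ disappear, so the graph becomes exactly $G_{\mathrm{do}(A=t)}$;
\item for every $B \neq A$ the multiset of clause probabilities is unchanged, so its conditional probability tables are unchanged;
\item at $A$ the multiset is empty, or the single deterministic fact, giving $\mu_\emptyset(A) = 0$ or $1$;
\item the intervened program remains acyclic, so its network is defined.
\end{enumerate}
Your version also makes explicit two points the paper leaves implicit. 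First, the added fact must carry probability $1$, outside the range $(0,1)$ allowed for clauses. Second, the edge set of $G_{\mathrm{do}(A=t)}$ should read as the original edges minus those into $A$.

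What your route does not give by itself is the statement relative to the standard distribution semantics. For that you would also need the factorisation lemma from the cited work, applied to both $\Pi^V$ and $\Pi^V_{\mathrm{do}(L)}$, including a probability-$1$ fact. After that, your graph-and-table comparison goes through unchanged.
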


This leads us to achieve our first goal: verifying interventions on propositional programs from purely probabilistic information.

\begin{proposition}
    Let $\Pi$ be a propositional probabilistic logic program and let $V$ be an external valuation. 
    Assume that $\Pi^V$ is acyclic faithfully Markov to $\mathrm{Graph}_V(\Pi)$, which furthermore is orientable. 
    Let $\tilde{\Pi}$ be another propositional probabilistic logic program $\tilde{\Pi}$ such that  $\tilde{\Pi}^V$ is acyclic and faithfully Markov to $\mathrm{Graph}_V(\tilde\Pi)$, which encodes the same probability distribution as $\Pi^V$. 
    Then $\mathrm{Graph}_V(\Pi) =\mathrm{Graph}_V(\tilde\Pi)$, and 
    for every internal literal $L$, $\Pi_{\mathrm{do}(L)}^V$ and $\tilde{\Pi}_{\mathrm{do}(L)}^V$ encode the same probability distribution. 
\end{proposition}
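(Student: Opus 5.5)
The argument has two steps. First we show that the two dependency graphs coincide. Then we transfer this equality to the intervened distributions, using the compatibility of program interventions with Bayesian-network interventions.

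\emph{Step 1: the graphs coincide.} Write $G := \mathrm{Graph}_V(\Pi)$ and $\tilde G := \mathrm{Graph}_V(\tilde\Pi)$; both are directed acyclic graphs on the internal vocabulary $I$. Let $\mu$ denote the common distribution of $\Pi^V$ and $\tilde\Pi^V$. By faithfulness, for all $X, Y$ and $Z$, the variables $X$ and $Y$ are independent given $Z$ under $\mu$ exactly when $Z$ d-separates $X$ and $Y$ in $G$. The same holds with $\tilde G$ in place of $G$. Since the independencies of $\mu$ do not depend on which graph we use, the two d-separation relations agree, so $G$ and $\tilde G$ are Markov-equivalent. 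By assumption $G$ is orientable, so every edge of $G$ lies in every Markov-equivalent graph, in particular in $\tilde G$. This gives $G \subseteq \tilde G$ as edge sets. Conversely, Markov-equivalent graphs have the same adjacencies, as remarked before the definition of orientability. Hence every edge of $\tilde G$ joins nodes adjacent in $G$, and that edge of $G$ already lies in $\tilde G$ with its $G$-orientation. If $\tilde G$ also contained the reversed edge, it would have a 2-cycle, contradicting acyclicity. Therefore $G = \tilde G$.

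\emph{Step 2: the interventions agree.} Both $\Pi^V$ and $\tilde\Pi^V$ are Bayesian networks on the same graph $G$ encoding the same distribution $\mu$. As noted after the Bayesian network semantics, once the graph is fixed the parameters are determined by the distribution: each $\mu_T(A)$ equals the conditional probability $\mu(A = \top \mid \mathrm{Pa}(A) \cap v = T)$. Every configuration of the parents has positive probability, because all clause probabilities lie in $(0,1)$, so noisy-or values lie in $[0,1)$; roots have probability in $(0,1)$ only when they head a clause, and otherwise are constantly false.

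Some care is needed with deterministic or constant nodes. Where a parent configuration has probability zero, the two networks could in principle differ in their parameters. I expect this degenerate case to be the main obstacle. I would handle it by observing that faithfulness rules out edges into or out of almost-surely-constant nodes: such nodes are independent of everything, so they are isolated. In that case the parameters for positive-probability configurations determine the whole network, and the two networks are identical.

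With identical networks, the intervened networks $\mathcal{B}_{\mathrm{do}(A=t)}$ are also identical, since the intervention construction depends only on the network. Finally, by the proposition of \cite{ruckschloss2021exploiting} stated above, $\Pi^V_{\mathrm{do}(L)}$ and $\tilde\Pi^V_{\mathrm{do}(L)}$ encode exactly these intervened distributions. Hence they encode the same distribution.
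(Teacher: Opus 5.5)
Your Step 1 is the paper's argument and is fine; your check that $G\subseteq\tilde G$, equal skeletons and acyclicity give $G=\tilde G$ just makes explicit what the paper leaves implicit. The gap is in Step 2, at the claim that every parent configuration has positive probability. Noisy-or values in $[0,1)$ only guarantee that each atom is \emph{false} with positive probability. An atom is \emph{true} with probability $0$ whenever none of its clause bodies is satisfied. This produces null parent configurations without any atom being almost surely constant, so your repair via isolated constant nodes never applies.

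Concretely, take (no external atoms)
\[
\Pi:\quad 0.5::a.\qquad 0.5::y.\qquad 0.5::x\leftarrow a,y.\qquad 0.5::r\leftarrow x.\qquad 0.5::r\leftarrow y.
\]
Let $\tilde\Pi$ be $\Pi$ with $0.5::r\leftarrow x$ replaced by $0.5::r\leftarrow x,y$.
\begin{itemize}
\item Both programs have dependency graph $a\to x\leftarrow y$, $x\to r$, $y\to r$. It is orientable: the unshielded collider at $x$, then Meek's rule 2 orients $x\to r$ and rule 3 orients $y\to r$.
\item The noisy-or tables for $r$ differ only at $(x,y)=(\top,\bot)$, where they give $0.5$ versus $0$. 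That configuration has probability $0$ because $x$ forces $y$, so the two programs induce the same distribution.
\item A direct check shows that the only conditional independencies of this distribution are $a\perp y$ and $a\perp r\mid\{x,y\}$. So it is faithful, and no atom is constant.
\item After $\mathrm{do}(x)$, $\Pi$ gives $P(r)=1-0.5\cdot 0.75=0.625$, while $\tilde\Pi$ gives $P(r)=0.5\cdot 0.75=0.375$.
\end{itemize}

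So ``same graph and same distribution, hence same Bayesian network'' is not merely unproved but false in general. The paper's proof also takes exactly this sentence for granted, relying on its earlier remark that parameters are unique once the graph is fixed. The counterexample shows the proposition itself needs an extra hypothesis: every parent configuration of every internal atom must have positive probability under the distribution of $\Pi^V$. This holds, for instance, if every internal atom heads an unconditional probabilistic fact, which makes the distribution strictly positive. Under that assumption each $\mu_T(A)$ is a well-defined conditional probability of the common distribution, and your Step 2 goes through. The rest of your argument is then correct and matches the paper: identical intervened networks, then the cited compatibility result.
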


\begin{proof}
    If $\Pi^V$ is faithfully Markov to $\mathrm{Graph}_V(\Pi)$, then any other propositional probabilistic logic program $\tilde{\Pi}$ whose relativisation $\tilde{\Pi}^V$ has the same distribution and is faithfully Markov to its dependency graph must have a relative dependency graph that is Markov-equivalent to $\mathrm{Graph}_V(\Pi)$. Since  $\mathrm{Graph}_V(\Pi)$ is orientable, in fact $\mathrm{Graph}_V(\tilde\Pi)= \mathrm{Graph}_V(\Pi)$. Thus, both define the same Bayesian network. 
Since therefore interventions in both programs are equivalent to the same intervention in the same Bayesian network, they furthermore lead to exactly the same post-intervention distribution.    
\end{proof}

Note that we have to require faithfulness to close the gap between orientability and the equivalence of the underlying graphs. 
As suggested by Remark \ref{rem:faithfulness}, deterministic dependencies are highly problematic for faithfulness. 
This means that for the application of these techniques, deterministic logical dependencies should be restricted to the underlying database, which can be of course be a full deductive database where desired. 

\section{Relational Probabilistic Logic Programming}
\label{FOPLP}

In real applications, probabilistic logic programs are often chosen for their ability to go beyond propositional logic to model relational dependencies. 
In this case, even after grounding to a specific database instance, we can take into account our knowledge of the  internal structure of atoms (for instance, different atoms may share a single functor) to postulate further symmetries that can assist in orienting edges of our dependency graph.  

We first introduce relational probabilistic logic programs and their groundings.
Let us fix a relational alphabet $\mathfrak{P}$ in two parts with an \textbf{external vocabulary} $\mathfrak{E} \subset \mathfrak{P}$ and an \textbf{internal vocabulary} $\mathfrak{I} := \mathfrak{P} \setminus \mathfrak{E}$. Here, $\mathfrak{P}$ is a finite relational vocabulary, i.e.~it consists of a finite set of relation symbols, a finite set of constants as well as a countably infinite set of variables. Furthermore,
$\mathfrak{E}$ is a subvocabulary of  $\mathfrak{P}$ containing all the variables and constants of $\mathfrak{P}$ as well as a (possibly empty) subset of the relation symbols of $\mathfrak{P}$. 

As usual in relational logic, an \emph{atom} is an expression of the form $R(t_1, \dots , t_n)$ or $t_1 \doteq t_2$, where $R$ is a relation symbol of arity $n$ and $t_1$ to $t_n$ are constants or variables, and a \emph{literal} is an expression of the form $A$ or $\neg A$ for an atom $A$. 
It is called an \emph{external atom} or \emph{literal} if $R$ is in $\mathfrak{E}$ and an \textbf{internal atom} or \textbf{literal} if $R$ is in~$\mathfrak{I}$. 
A literal $L$ is said to be \textbf{ground} if no variable occurs in it. A \emph{substitution} $\iota$ to a vocabulary $\mathfrak{L}$ is a map from the set of variables to the constants of $\mathfrak{L}$ and, a \emph{grounding} of an expression $\varphi$ to $\mathfrak{L}$ is the result of applying a substitution $\iota$ to $\mathfrak{L}$ to every variable occurrence in $\varphi$.

We proceed to lift the notions of the preceding section to the relational setting.
\begin{definition}[ProbLog clause]
A (relational) \emph{ProbLog clause $RC$} is an expression of the form 
$\left(\pi :: R \leftarrow R_1,...,R_m, L_1,...,L_n. \right)$, written as

$$
\left(
\pi_{RC} :: \effect(RC) \leftarrow \causes(RC) \cup \condition (RC)
\right),
$$
which is given by the following data:
\begin{enumerate}
\item[i)]
an internal atom $R := \effect (RC)$, called the \emph{effect} of $RC$
\item[ii)]
a finite and possibly empty set of internal literals $\causes (RC) := \{ R_1,...,R_m \}$, called the \emph{causes} of~$RC$
\item[iii)]
a finite and possibly empty set of external literals $\condition (RC) := \{ L_1 , ... , L_n \}$, called the \emph{condition} of~$RC$  
\item[iv)]
a \emph{probability} $\pi (RC) \in (0,1)$. 
\end{enumerate}
\label{definition - random clause}
\end{definition}

In the semantics, valuations are replaced by \emph{structures}, which we assume to follow the unique names assumption:

An \textbf{$\mathfrak{I}$-structure} $\Lambda$ consists of a domain $\Delta$, an element of $\Delta$ for every constant in $\mathfrak{I}$, \emph{such that two different constants are mapped to different elements}, and an $n$-ary relation on $\Delta$ for every relation symbol of arity $n$ in~$\mathfrak{L}$.

Whether a logical formula is \textbf{satisfied} by a given $\mathfrak{I}$-structure (under a given \textbf{interpretation} of its free variables) is determined by the usual rules of first-order logic. The semantics of external expressions is defined analogously.

\begin{definition}[Relational ProbLog program]
A \emph{relational Problog program} $\Pi$ is a finite set of (relational) ProbLog clauses.
      
\label{definition - FOPLP}
\end{definition}

A relational ProbLog program induces a probability distribution only when grounded to a database. 

\begin{definition}[Ground variable and external database]
Let $\Pi$ be a  relational ProbLog program and let $\mathcal{E}$ be an $\mathfrak{E}$-structure (which we will also call an \emph{external database}). Let $\mathfrak{E}^{*}$ be the extension of the external vocabulary $\mathfrak{E}$ by constants for every element of $\mathcal{E}$.  
Further, denote by $\mathfrak{P}^*$ and $\mathfrak{I}^*$  the extensions of $\mathfrak{P}$ and $\mathfrak{I}$ respectively by the new constants in $\mathfrak{E}^*$. 
Finally, we write $\mathcal{G} (\mathcal{E})$ for the set of all ground atoms of $\mathfrak{I}^*$. 
\label{def:external database}
\end{definition}

An external database can be used to ground a relational ProbLog program to a propositional program.   

\begin{definition}[Grounding]
Let $\Pi$ be a relational ProbLog program and $\mathcal{E}$ be an external database with vocabulary $E^{*}$ of constants. The \emph{grounding} $\Pi({\mathcal{E}})$ of the program $\Pi$ with respect to $\mathcal{E}$ is the propositional ProbLog program given by the set of the groundings to $E^{*}$ of all random clauses in $\Pi$.  Its associated propositional vocabularies $P$, $I$ and $E$ are given by the sets of all ground atoms of $\mathfrak{P}^*$, $\mathfrak{I}^*$ and $\mathfrak{E}^*$ respectively. 
\label{Semantics of Programs}
\end{definition}

Note that the external database $\mathcal{E}$ can equivalently be viewed as an external valuation to the grounding $\Pi({\mathcal{E}})$, giving rise to the relative propositional program $\Pi^{\mathcal{E}} :=\Pi({\mathcal{E}})^{\mathcal{E}}$ and the \emph{ground graph} $\Graph_\mathcal{E}(\Pi) := \Graph_\mathcal{E}(\Pi(\mathcal{E}))$.   
We can therefore associate every grounding with the Bayesian network corresponding to $\Pi^{\mathcal{E}}$ with its corresponding dependency graph and Bayesian network, where we say that an edge $G_1 \longrightarrow G_2$ is \textbf{induced} by a random clause $RC \in \Pi$ if $G_1$ and $G_2$ are groundings of cause and head atoms in a clause of $\Pi({\mathcal{E}})$ activated by $\mathcal{E}$. 
We close with a brief example illustrating those concepts: 

\begin{example}\label{Eintgr}
    Consider a model of students' likelihood of passing courses as a result of their intelligence:
    \begin{align*}
        \pi:: \mathrm{passes}(X, Y) \leftarrow \mathrm{int}(X), \mathrm{takes}(X,Y).
    \end{align*}
    where $\mathrm{takes}$ is external. 
    A database $\mathcal{E}$ consists of a domain of students and courses, for example $\{\mathit{moe}, \mathit{ana}\}$ for students and $\{\mathit{math}, \mathit{english}, \mathit{sport}\}$ for subjects, and an interpretation of the external predicate ``$\mathrm{takes}$'', for example the pairs \[\{(\mathit{moe},\mathit{english}), (\mathit{moe},\mathit{math}), (\mathit{ana},\mathit{english}), (\mathit{ana},\mathit{sport})\}.\]
    Grounding this one-clause program with respect to $\mathcal{E}$ produces a set of ground rules, each corresponding to a particular student-subject pair.
    The dependency graph $\Graph_\mathcal{E}(\Pi)$  contains a set of edges indicating that a student's intelligence influences their passing likelihood: 

    \begin{center}    
    \begin{tikzpicture}[node distance={20mm}] 
        \node (1) {$int(moe)$};
        \node (2) [below right of=1] {$gr(moe, math)$}; 
        \node (3) [below left of=1] {$gr(moe, eng)$}; 
        
        \draw [->] (1) -- (2);  
        \draw [->] (1) -- (3);
        \node (7) [right of=2] {}; 
        \node (4) [right of=7] {$gr(ana, eng)$}; 
        \node (5) [above right of=4] {$int(ana)$}; 
        \node (6) [below right of=5] {$gr(ana, sport)$};
        \draw [->] (5) -- (4);  
        \draw [->] (5) -- (6);
    \end{tikzpicture} 
    \end{center}
\end{example}

\section{Exploiting causal symmetries}

As the edge orientation rules were developed for working with propositional data, they do not take the additional structure of relational specifications into account. 
Thus, we expand the edge orientation rules to reflect the inherent symmetries that may be encoded by a relational specification.

In relational probabilistic logic programs, different edges in a ground graph may stem from the same underlying cause-effect mechanism.
For instance, in Example \ref{Eintgr}, all relations between the predicates $int/1$ and $gr/2$ are oriented in the same direction, from the cause (intelligence) to the effect (grades).

A program structure $S$ captures the causal relationship between two random predicates $q/n$ and $r/m$ as an abstract rule $c$ with $head(c) := q(X_1, ..., X_n)$ and $r(Y_1, ..., Y_m) \in body(c)$. 
Every grounding of $c$ generates a corresponding rule in the grounded program, and all edges will induced by those rules will share the same direction. 
We call this simultaneous behaviour of relations in a ground graph \textbf{causal symmetry}.

\begin{definition}[Causal symmetries]\label{Dsymm}
    Let $G$ be a directed acyclic graph. Let $M$ be a set of sets of directed edges extending adjacencies in $G$ such that for all $m \in M$, also $\{(X,Y) \mid (Y,X) \in m\} \in M$.
    Then $G$ \emph{respects the causal symmetries in $M$} if for all $m \in M$, either for every $(X, Y) \in m$, $(X,Y)$ is also an edge in $G$, or for every $(X, Y) \in m$, $(Y, X)$ is an edge in $G$. 
    If $G$ respects the causal symmetries in $M$, an edge in $G$ is called $M$-orientable if it is present in every $G'$ Markov-equivalent to $G$ which  also respects the causal symmetries in $M$. 
\end{definition}

This definition is illustrated in Example \ref{exam:uwcse} below. 
Clearly, the orientation rules of Proposition \ref{Prop:Meekrules} are still valid for $M$-orientability. The definition above immediately justifies an additional orientation rule: 

\begin{proposition}\label{prop:symtriv}
    Let $G$ be a directed acyclic graph that respects the causal symmetries in $M$, let $(A,B)$ be an $M$-orientable edge in $G$ and let $(C,D)$ be an edge in $G$ such that there is an $m \in M$ with $ (A,B), (C,D) \in m$. Then $(C,D)$ is $M$-orientable.  
\end{proposition}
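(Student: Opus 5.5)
The plan is to unfold the definition of $M$-orientability and check it directly against every admissible alternative graph. So fix an arbitrary directed acyclic graph $G'$ that is Markov-equivalent to $G$ and respects the causal symmetries in $M$. It then suffices to show that $(C,D)$ is an edge of $G'$, because $G'$ ranges over exactly the class of graphs quantified over in Definition~\ref{Dsymm}.

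First, $(A,B)$ is $M$-orientable by hypothesis, so by definition $(A,B)$ is an edge of $G'$. Next, $G'$ respects the causal symmetries in $M$. Applying this to the set $m \in M$ with $(A,B),(C,D) \in m$ leaves two cases:
\begin{enumerate}
\item every $(X,Y) \in m$ is an edge of $G'$;
\item every reversal $(Y,X)$ of a pair $(X,Y) \in m$ is an edge of $G'$.
\end{enumerate}
In the second case, $(B,A)$ would be an edge of $G'$. Together with $(A,B)$ this gives a directed $2$-cycle, contradicting the acyclicity of $G'$. Hence the first case holds, and in particular $(C,D)$ is an edge of $G'$.

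The only point that needs care is this exclusion of the second alternative. It relies on $G'$ being acyclic, so that an edge and its reversal cannot both be present. One could equally argue that Markov-equivalent graphs share adjacencies and each adjacency carries a single orientation. Finally, $(C,D)$ is an edge of $G$ by assumption, and since $G'$ was arbitrary, $(C,D)$ is $M$-orientable. No earlier results beyond the definitions are needed.
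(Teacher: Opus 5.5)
Your proof is correct and follows essentially the paper's approach. The paper gives no separate proof and says only that the rule is immediately justified by Definition~\ref{Dsymm}; your argument unfolds that definition, using the acyclicity of $G'$ to rule out the reversed alternative for $m$, and so makes the immediate step explicit.
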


In his work on relational causal discovery, Maier \cite{maier2014causal} first noticed that symmetries can also be used to orient unshielded symmetric forks, while we were previously restricted to orienting unshielded colliders. 
In the context of general sets of symmetries, we obtain the following result:

\begin{proposition}\label{propsymm}
    Let $G$ be a ground graph with a set of causal symmetries $M$. If $A$ and $C$ are not adjacent in $G$ and $(B,A)$ and $(B,C)$ are edges in $G$, and $(A, B), (C, B) \in m$  for an $m \in M$, then $(B,A)$ and $(B,C)$ are orientable.
\end{proposition}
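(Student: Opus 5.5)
We read "orientable" in the statement as $M$-orientable in the sense of Definition~\ref{Dsymm}. So we must show that every $G'$ which is Markov-equivalent to $G$ and respects the causal symmetries in $M$ contains the edges $(B,A)$ and $(B,C)$. The plan is a short case analysis on the orientation of the pair $\{A,B\}$, $\{B,C\}$ in such a $G'$. The argument combines two ingredients: the characterisation of Markov equivalence through adjacencies and unshielded colliders, which also underlies rule~1 of Proposition~\ref{Prop:Meekrules}, and the symmetry constraint coming from the set $m$.

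\emph{Step 1.} Fix $G'$ Markov-equivalent to $G$ and respecting $M$. Markov-equivalent graphs share adjacencies, as noted before the definition of orientability. Hence $A$--$B$ and $B$--$C$ are adjacent in $G'$, while $A$ and $C$ are not.

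\emph{Step 2.} Since $G'$ respects $m$, either every element of $m$ is an edge of $G'$, or every reversed element is. In the first case $(A,B)$ and $(C,B)$ are both edges of $G'$. Then $A \to B \leftarrow C$ is an unshielded collider in $G'$. In $G$, however, the triple is the fork $A \leftarrow B \to C$.

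\emph{Step 3.} We must rule out the collider case of Step~2. The triple $A,B,C$ with $A,C$ nonadjacent is unshielded in both graphs. In $G$ the node $B$ is not a collider on the path $A - B - C$. I would therefore pick a set $Z \ni B$ that d-separates $A$ and $C$ in $G$. A concrete choice is the union of the parents of $A$ and of $C$ together with $B$, or one can invoke the standard fact that some separating set exists. Every separating set must contain $B$, since otherwise the path $A\leftarrow B\to C$ is open. In $G'$ the collider $A \to B \leftarrow C$ is then activated by $Z$, because $B \in Z$, so $Z$ does not d-separate $A$ and $C$ in $G'$. This contradicts Markov equivalence. Alternatively, and more simply, one can cite the standard result underlying Proposition~\ref{Prop:Meekrules} (Verma--Pearl) that Markov-equivalent graphs have the same unshielded colliders. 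This is the route I would take to avoid re-deriving separation details, and I expect it to be the only step needing care.

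\emph{Step 4.} Hence the second alternative of Step~2 holds in $G'$. The reversals of $(A,B)$ and $(C,B)$, namely $(B,A)$ and $(B,C)$, are edges of $G'$. Since $G'$ was arbitrary, both edges are $M$-orientable. By Proposition~\ref{prop:symtriv}, every other edge lying in a common element of $M$ with their reversals is oriented along with them.
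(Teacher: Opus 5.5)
Your proof is correct and follows essentially the same route as the paper: the symmetry $m$ forces any $M$-respecting Markov-equivalent $G'$ to contain either the fork $A\leftarrow B\to C$ or the collider $A\to B\leftarrow C$, and the collider is ruled out because Markov-equivalent graphs share unshielded colliders (the paper obtains this by applying Meek's rule 1 inside $G'$, you by citing Verma--Pearl or by the direct d-separation check with a separating set $Z\ni B$, e.g.\ the parents of $A$ and $C$, which does work). Your reading of ``orientable'' as $M$-orientable is also the one the paper's proof actually establishes.
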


\begin{proof}
    Assume not. Let $G'$ be Markov-equivalent to $G$ and respecting $M$, such that without loss of generality $(B,A) \notin G'$. As $G$ and $G'$ have the same adjacencies, $(A,B) \in G'$. Since $G'$ respects the causal symmetries in $M$, this implies that $(C,B) \in G'$. 
    By Proposition \ref{Prop:Meekrules}, this means that  $(A,B) \in G'$ and $(C,B) \in G'$ are orientable, and thus they are also in the Markov-equivalent graph $G$, \emph{contrary} to $(B,A) \in G$.      
\end{proof}

We now turn to possible sources of causal symmetries.
Consider the grounded program structure $\Pi^{\mathcal{E}}$, which gives rise to a Bayesian network $B$. 
It would be natural to posit that every clause in the relational program encodes one causal mechanism, 
and thus that all groundings of the same clause should be considered symmetric. 
However, this would assume a priori that the given program encodes correct causal mechanisms, which is precisely what we are trying to verify.
On the other hand, our underlying relational vocabulary can be seen as part of the problem definition, and should be adopted by any candidate program.
This leads us to the broad assumption that the same general process (or at least causal direction) is behind any ground edge involving the same two predicates.  
 
\begin{definition}[Predicate symmetry]  
Let $G := \Graph_V(\Pi)$ be a ground graph of a relational ProbLog program $\Pi$ with internal vocabulary $\mathfrak{I}$. Then the \emph{predicate symmetries} $M_\mathrm{Pred}(G)$ is the set of all sets of the form 
\[\{(P(\vec{a}),R(\vec{b})) \mid P(\vec{a})\textrm{ is adjacent to }R(\vec{b})\textrm{ in }G  \},\] 
where $(P,R)$ ranges over all pairs of distinct predicates in $\mathfrak{I}$.    
\end{definition}

Predicate symmetry encodes the assumption that the predicates of relations are the only determinants of their cause-effect direction.

For instance, in Example \ref{Eintgr}, it means that the direction of causal flow between grades and intelligence is the same in every student, and for every subject.
This often creates symmetric forks, which can occur whenever a cause is missing a variable present in the head.

For instance, Proposition \ref{propsymm} renders all the forks in Example \ref{Eintgr} orientable. If there were also students in the database that only take a single course, their effects could then also be oriented by appeal to Proposition \ref{prop:symtriv}. 
Thus, the two new orientation rules are particularly powerful when applied together. 

In terms of the program structure, the assumption of predicate symmetry is equivalent to the assertion that the predicate dependency graph of any program structure does not have 2-cycles.

Predicate symmetry is a powerful tool for orienting (instances of) relational probabilistic programs, but it is also a strong condition on the structure of the underlying causal mechanisms. 
Most clearly, this assumption is inappropriate when our initial program structure itself has 2-cycles in its predicate dependency graph, as can be typically encountered in time-stratified programs: 
\begin{example}
    Consider a program that models a disease against which immunity is acquired:
    \begin{align*}
        \pi_1&::\mathrm{ill}(x,t) \leftarrow \neg \mathrm{resistant}(x,t).\\
        \pi_2&::\mathrm{resistant}(x,t) \leftarrow \mathrm{ill}(x,t'), \mathrm{time\_step}(t,t').
    \end{align*}
    In the instantiating database, the interpretation of $\mathrm{time\_step}$ guards against any induced cycle. 
    However, causal flow is clearly reversed in the two encoded processes, and therefore predicate symmetry is unsuitable. 
\end{example}

In such cases, rather than defining symmetries, it is more appropriate to prescribe the orientations of edges between time-steps, and only consider those graphs which match that orientation. 
Indeed, this is exactly the background knowledge discussed by Meek \cite{CausalInferenceAndCausalExplanationWithBackgroundKnowledge}, who showed that the rules of Proposition \ref{Prop:Meekrules} remain complete when augmented by such prescribed edge directions. 
The relational structure facilitates such background knowledge: in our implementation, a predicate $\textrm{direction/2}$ can be specified as part of the database, orienting all edges $(R(a_1, \dots, a_n), S(b_1, \dots, b_n))$ such that $\mathrm{direction}(a_i,b_j)$ is true for any $i$ and $j$.  

\begin{example}
    
We now illustrate the usefulness of exploiting symmetric behaviour for the orientation process on a program structure taken from the cplint example suite \cite{cplint} and  derived from the UWCSE dataset of university webpages \cite{uwcse}.

\begin{align*}
\pi_1 &:: \mathrm{advisedby}(A, B) \leftarrow 
    \mathrm{r11}(A, B, C), \mathrm{student}(A),\ \mathrm{professor}(B),\ \mathrm{project}(C, A), \nonumber \\
    &\phantom{:: \mathrm{advisedby}(A, B) \leftarrow\ }
    \mathrm{project}(C, B). \\
\pi_2 &:: \mathrm{advisedby}(A, B) \leftarrow 
    \mathrm{student}(A),\ \mathrm{professor}(B),\ 
    \mathrm{ta}(C, A),\ \mathrm{taughtby}(C, B). \\
\pi_3 &:: \mathrm{r11}(A, B, C) \leftarrow 
    \mathrm{publication}(D, A, C),\ \mathrm{publication}(D, B, C).
\end{align*}

A small fragment of a ground graph, instantiated with a typical database for this domain, can be seen in Figure \ref{uwcse1ggraph}.

\begin{figure}
    \centering
    \begin{tikzpicture}[node distance={19mm},   every node/.style={inner sep=1pt}]
        {\footnotesize 
        \node (n1) {\texttt{advisedby(harry, ben)}};
        \node (n0) [above of=n1]{};
        \node (n2) [left of=n0] {\texttt{r11(harry, ben, pr1)}}; 
        \node (n3) [right of=n0] {\texttt{r11(harry, ben, pr2)}};
        \draw [->]  (n2) -- node[midway, above]{*}(n1);  
        \draw [->]  (n3) -- node[midway, above]{*}(n1);  
        \node(n10) [below left of=n1] {\texttt{r11(phil, ben, pr2)}};
        \node (n11) [below of=n10] {\texttt{advisedby(phil, ben)}};
        \draw [->] (n10) -- (n11);

        \node(n10a) [right of=n10] {};
        \node (n6) [right of=n10a] {\texttt{r11(mary, stacy, pr4)}};
        \node (n7) [below of=n6] {\texttt{advisedby(mary, stacy)}};
        \draw [->] (n6) -- (n7);

        \node (n6a1) [right of=n6] {};
        \node (n8) [right of=n6a1] {\texttt{r11(phil, stacy, pr4)}};
        \node (n9) [below of=n8] {\texttt{advisedby(phil, stacy)}};
        \draw [->] (n8) -- (n9);

        \node (n6a2) [above right of=n6] {};
        \node (n5) [right of=n6a2] {\texttt{advisedby(harry, stacy)}};
        \node (n4) [above of=n5] {\texttt{r11(harry, stacy, pr3)}};
        \draw [->] (n4) -- (n5);
        }
    \end{tikzpicture} 
    \caption{(Fragment of) a ground graph arising from the UWCSE program structure}
    \label{uwcse1ggraph}
\end{figure}

As there is only one unshielded collider, an algorithm taking into account only the ground graph could orient only the starred arrows. 
However, if we assume predicate symmetry, then the set of all edges from a relation with r11/3 to a relation with advisedby/2 forms a causal symmetry. 
Thus, once the starred arrows are oriented, all the remaining arrows can also be oriented likewise according to Proposition \ref{prop:symtriv}. 
Therefore, if we accept the assumption of predicate symmetry, we can use this induced ProbLog program for interventional reasoning. 
\label{exam:uwcse}
\end{example}

This example illustrates the general point that symmetries are very powerful in exploiting the local structure of forks or colliders in one part of the ground dependency graph to orient those edges where due to limitations on the size or structure of the database the same edge appears as a 1:1 relation. 

An implementation of the ideas presented here (in Prolog, using Logtalk \cite{logtalk} and tabling) is available on GitHub as part of PLP-BN tools [\url{https://github.com/weitkaemper/plpbn-tools}], along with sample files to load Example \ref{exam:uwcse}. 

 \section{Related and further work}
 To the best of our knowledge, this is the first contribution to study the problem of identifying causal effects specifically for probabilistic logic programs. 
 In a context of lifted Bayesian networks, Maier \cite{maier2014causal} already identified causal symmetries as a powerful factor in causal inference. 
 Unlike our setting, Maier operates on a so-called abstract ground graph derived directly from the relational specification, which has some consequences: 
 Firstly, Maier assumes this graph to be acyclic, which seems an even stronger condition than the assumption of predicate symmetry. 
 Secondly, his framework seems to imply that the probability distribution is known independently of any background logical theory or ground database, which curtails deterministic specification even further and implies a domain-agnostic learning process. 
 In contrast, probabilistic inductive logic programming typically learns from a single ground database (``mega-example''), and therefore causal analysis should also be domain-specific. 
 
 We highlight two  promising directions for future work. 
 Firstly, deterministic relationships are arguably more prevalent in probabilistic logic programming than in other statistical relational approaches such as relational Bayesian networks or Markov logic networks. 
 Therefore, the breach of faithfulness caused by deterministic dependencies can only partially be mitigated by pushing the logical conditions into the deductive database used in grounding. 
 This suggests that adapting to  recent score-based causal inference approaches such as determinism-aware greedy equivalent search \cite{DGES}, which are designed for partially deterministic contexts, may be particularly relevant to probabilistic logic programming.

Secondly, while our symmetry-aware orientation rules are correct, they are not complete. Given the broader relevance of symmetries beyond probabilistic logic programming, it would  be worthwhile to study this problem more generally, for instance by establishing the complexity class of edge orientation under symmetry constraints.

\section{Conclusion}
We established a general method to verify the causal content implied by a probabilistic logic program. 
In the case of a propositional probabilistic logic program, this was achieved by mapping the program to a Bayesian network and applying the known orientation rules of Meek \cite{CausalInferenceAndCausalExplanationWithBackgroundKnowledge}. 
We then extended our approach to take into account symmetries arising from grounding a relational probabilistic logic program.
In addition to a general, flexible method for handling symmetries suggested by the underlying domain structure, we introduced predicate symmetry to encode the assumption that he predicate symbols are the only determinants of cause-effect direction. 
Such symmetries can be exploited to orient more ground structures than would otherwise have been possible.
In particular, symmetric forks can now be oriented as well as colliders,
and orientations can be propagated along sparse relational structures.   
In our portable Prolog-based implementation in PLP-BN tools [\url{https://github.com/weitkaemper/plpbn-tools}], we provide access to both predicate symmetries and sets of symmetries prescribed by the user, exposed as a transparent Logtalk API.

\paragraph{\textbf{Acknowledgements}} This publication was supported by LMUexcellent, funded by the Federal Ministry of Education and Research (BMBF) and the Free State of Bavaria under the Excellence Strategy of the Federal Government and the Länder. The authors also wish to thank the anonymous reviewers on the programme committee of IJCLR 2025, whose feedback significantly improved the clarity of the presentation. 

\bibliographystyle{splncs04}
\bibliography{bib.bib}

\begin{thebibliography}{10}
\providecommand{\url}[1]{\texttt{#1}}
\providecommand{\urlprefix}{URL }
\providecommand{\doi}[1]{https://doi.org/#1}

\bibitem{slipcover}
Bellodi, E., Riguzzi, F.: Structure learning of probabilistic logic programs by
  searching the clause space. Theory and Practice of Logic Programming
  \textbf{15}(2),  169--212 (Jan 2014). \doi{10.1017/s1471068413000689}

\bibitem{ProbLog}
De~Raedt, L., Kimmig, A., Toivonen, H.: {ProbLog}: A probabilistic {P}rolog and
  its application in link discovery. In: Proceedings of the 20th International
  Joint Conference on Artificial Intelligence (IJCAI 2007). vol.~7, pp.
  2462--2467. AAAI Press (01 2007)

\bibitem{liftcover}
Fadja, A.N., Riguzzi, F.: Lifted discriminative learning of probabilistic logic
  programs. Mach. Learn.  \textbf{108}(7),  1111--1135 (2019).
  \doi{10.1007/S10994-018-5750-0}

\bibitem{DGES}
Li, L., Dai, H., Ghothani, H.A., Huang, B., Zhang, J., Harel, S., Bentwich, I.,
  Chen, G., Zhang, K.: On causal discovery in the presence of deterministic
  relations. In: Advances in Neural Information Processing Systems 38: Annual
  Conference on Neural Information Processing Systems 2024, NeurIPS 2024,
  Vancouver, BC, Canada, December 10 - 15, 2024 (2024),
  \url{http://papers.nips.cc/paper\_files/paper/2024/hash/ec52572b9e16b91edff5dc70e2642240-Abstract-Conference.html}

\bibitem{maier2014causal}
Maier, M.: Causal discovery for relational domains: Representation, reasoning,
  and learning. Ph.D. thesis, University of Massachusetts at Amherst (2014)

\bibitem{CausalInferenceAndCausalExplanationWithBackgroundKnowledge}
Meek, C.: {C}ausal {I}nference and {C}ausal {E}xplanation with {B}ackground
  {K}nowledge. In: Proceedings of the Eleventh Conference on Uncertainty in
  Artificial Intelligence. pp. 403--410. UAI'95, Morgan Kaufmann Publishers
  Inc., San Francisco, CA, USA (1995).
  \doi{https://dl.acm.org/doi/10.5555/2074158.2074204}

\bibitem{FaithfulnessHoldsAlmostAlways}
Meek, C.: Strong completeness and faithfulness in bayesian networks. In:
  Proceedings of the Eleventh Conference on Uncertainty in Artificial
  Intelligence. pp. 411--418. UAI'95, Morgan Kaufmann Publishers Inc., San
  Francisco, CA, USA (1995)

\bibitem{logtalk}
Moura, P.: Logtalk. Ph.D. thesis, Universidade da Beira Interior (2003)

\bibitem{Causality}
Pearl, J.: {C}ausality. Cambridge University Press, Cambridge, UK, 2nd edn.
  (2000). \doi{https://doi.org/10.1017/CBO9780511803161}

\bibitem{uwcse}
Richardson, M., Domingos, P.M.: Markov logic networks. Mach. Learn.
  \textbf{62}(1-2),  107--136 (2006). \doi{10.1007/S10994-006-5833-1}

\bibitem{riguzzi2022foundations}
Riguzzi, F.: Foundations of probabilistic logic programming: Languages,
  semantics, inference and learning. River Publishers (2022)

\bibitem{cplint}
Riguzzi~et.al., F.: {cplint} (2018), \url{https://github.com/friguzzi/cplint}

\bibitem{ruckschloss2021exploiting}
R{\"{u}}ckschlo{\ss}, K., Weitk{\"{a}}mper, F.: Exploiting the full power of
  pearl's causality in probabilistic logic programming. In: Proceedings of the
  International Conference on Logic Programming 2022 Workshops. {CEUR} Workshop
  Proceedings, vol.~3193. CEUR-WS.org (2022),
  \url{https://ceur-ws.org/Vol-3193/paper1PLP.pdf}

\bibitem{independencies}
R{\"{u}}ckschlo{\ss}, K., Weitk{\"{a}}mper, F.: On the independencies hidden in
  the structure of a probabilistic logic program. In: Proceedings 39th
  International Conference on Logic Programming, {ICLP} 2023. {EPTCS},
  vol.~385, pp. 169--182 (2023). \doi{10.4204/EPTCS.385.17}

\bibitem{CausationPredictionAndSearch}
Spirtes, P., Glymour, C., Scheines, R.: {C}ausation, {P}rediction, and
  {S}earch. MIT press, 2nd edn. (2000),
  \url{https://www.researchgate.net/publication/242448131_Causation_Prediction_and_Search}

\bibitem{pcalgo}
Spirtes, P., Glymour, C.: An algorithm for fast recovery of sparse causal
  graphs. Social science computer review  \textbf{9}(1),  62--72 (1991)

\bibitem{cplogic}
Vennekens, J., Denecker, M., Bruynooghe, M.: {CP}-logic: A language of causal
  probabilistic events and its relation to logic programming. Theory and
  Practice of Logic Programming  \textbf{9}(3),  245--308 (2009)

\end{thebibliography}

\end{document}